\newcommand{\pengxin}[1]{{\color{black}{#1}}}
\newtheorem{remark}{Remark}
\newtheorem{theorem}{Theorem}
\begin{document}

\title{Online Test-Time Adaptation of Spatial-Temporal Traffic Flow Forecasting}

\author{Pengxin Guo, Pengrong Jin, Ziyue Li, Lei Bai, and Yu Zhang,~\IEEEmembership{Member, IEEE}
\thanks{Pengxin Guo is with the Department of Statistics and Actuarial Science, The University of Hong Kong, Hong Kong 999077, China (e-mail: guopx@connect.hku.hk).}
\thanks{Pengrong Jin is with the Department of Mathematics, Southern University of Science and Technology, Shenzhen 518055, China (e-mail: jinpr@mail.sustech.edu.cn).}
\thanks{Ziyue Li is with the Department of Information Systems, University of Cologne, Cologne 50923, Germany (e-mail: zlibn@wiso.uni-koeln.de).}
\thanks{Lei Bai is with the Shanghai Artificial Intelligence Laboratory, Shanghai 200232, China (e-mail: baisanshi@gmail.com).}
\thanks{Yu Zhang is with the Department of Computer Science and Engineering, Southern University of Science and Technology, Shenzhen 518055, China, and also with the Peng Cheng Laboratory, Shenzhen 518000, China (e-mail:  yu.zhang.ust@gmail.com).}
\thanks{This work was done during the first author's internship at Shanghai Artificial Intelligence Laboratory.}
\thanks{Corresponding authors: Lei Bai; Yu Zhang.}}




\maketitle

\begin{abstract}
\pengxin{Accurate spatial-temporal traffic flow forecasting is crucial in aiding traffic managers in implementing control measures and assisting drivers in selecting optimal travel routes.}
Traditional \pengxin{deep-learning based} methods for \pengxin{traffic flow} forecasting typically rely on historical data to train their models, which are then used to make predictions on future data. However, the performance of the trained model usually degrades due to the temporal drift between the historical and future data. To make the model trained on historical data better adapt to future data in a fully online manner, this paper conducts the first study of the online test-time adaptation techniques for spatial-temporal \pengxin{traffic flow} forecasting problems. To this end, we propose an \textbf{A}daptive \textbf{D}ouble \textbf{C}orrection by \textbf{S}eries \textbf{D}ecomposition (ADCSD) method, which first decomposes the output of the trained model into seasonal and trend-cyclical parts and then corrects them by two separate modules during the testing phase using the latest observed data \textit{entry by entry}. In the proposed ADCSD method, instead of fine-tuning the whole trained model during the testing phase, a lite network is attached after the trained model, and only the lite network is fine-tuned in the testing process each time a data entry is observed. Moreover, to satisfy that different time series variables may have different levels of temporal drift, two adaptive vectors are adopted to provide different weights for different time series variables. Extensive experiments on four real-world \pengxin{traffic flow} forecasting datasets demonstrate the effectiveness of the proposed ADCSD method. The code is available at \href{https://github.com/Pengxin-Guo/ADCSD}{https://github.com/Pengxin-Guo/ADCSD}.

\end{abstract}

\begin{IEEEkeywords}
Spatial-Temporal Traffic Flow Forecasting, Online Test-Time Adaptation, Time Series Decomposition.
\end{IEEEkeywords}

\section{Introduction}
\label{sec:intro}


\begin{figure}[t]
\centering
\includegraphics[width=\linewidth]{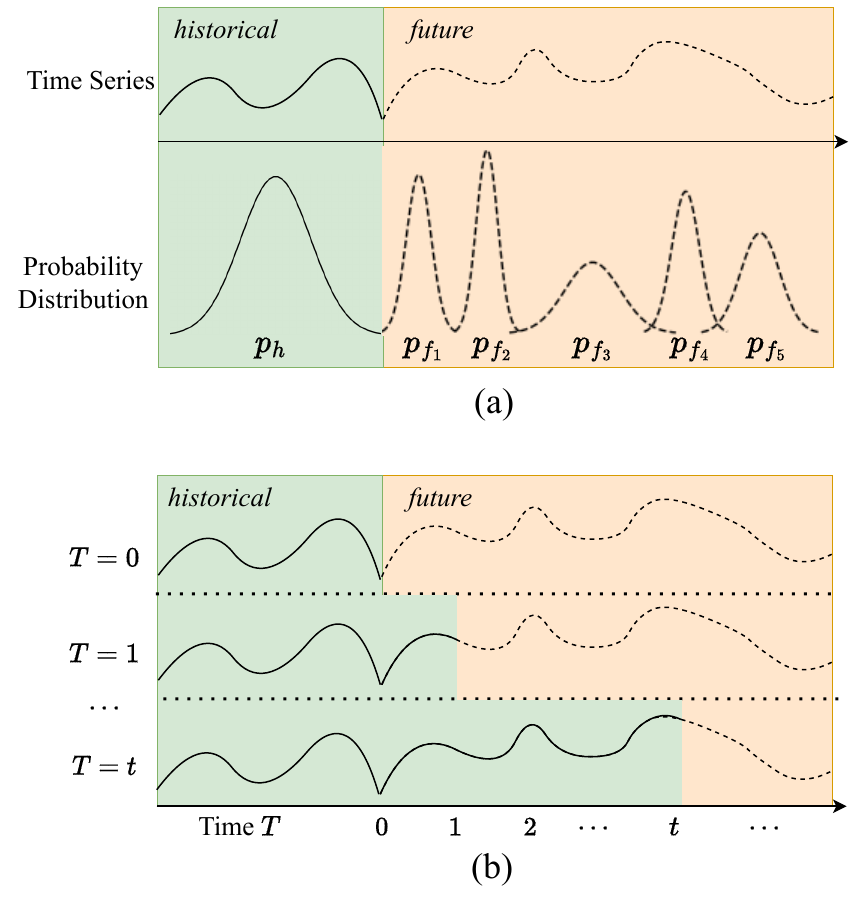}
\caption{(a). The temporal drift problem in non-stationary time series. The raw time series data is multivariate in reality. As our focus is on the distribution of the future data rather than the historical data, we assume that the historical data follows one distribution $p_h$, although it is possible for it to follow multiple distributions. For the future data, the distribution will changes over time, i.e., $p_{f_1} \neq p_{f_2} \neq p_{f_3} \neq p_{f_3} \neq p_{f_4} \neq p_{f_5} \neq p_h. $ (b). Data identity change. With the time goes by, the future data will become historical data.}
\vskip -0.1in
\label{fig:intro_all}
\end{figure}

\IEEEPARstart{S}{patial-temporal} \pengxin {traffic flow forecasting is a key component of intelligent transportation systems (ITS) and has received significant attentions in recent years \cite{10323235, lv2014traffic, zhao2019t, tedjopurnomo2020survey, jiang2022graph,nagy2018survey}.}
Most \pengxin{deep learning-based traffic flow forecasting} methods train models on historical data and make predictions on future data without modifying the trained model \cite{guo2019attention,bai2020adaptive,woo2022cost,pdformer}. However, the performance of those models is usually unsatisfactory due to the non-stationarity of time series data, i.e., temporal drift between the historical and future data \cite{kuznetsov2014generalization,du2021adarnn} (see Figure \ref{fig:intro_all}(a)). This temporal drift could stem from changes in upstream processes, sensors, materials, natural drift, or variable relation shifts, etc. To solve the temporal drift problem, some works \cite{du2021adarnn,duan2023combating} aim to learn distribution-aware knowledge to aid in model training  by assuming that future data follow a different distribution from the historical data. However, it is challenging to assume the availability of the distribution for future data, as it often changes over time, as illustrated in Figure \ref{fig:intro_all}(a). Moreover, those methods do not take into consideration the relativity of data identity. That is, the definition of history and future in time series data is relative, as the historical data accrues over time, as shown in Figure \ref{fig:intro_all}(b).

\IEEEpubidadjcol
To handle those problems faced in spatial-temporal \pengxin{traffic flow} forecasting, we study the Online Test-Time Adaptation (OTTA) techniques \cite{liang2023comprehensive,sun2020test,niu2022efficient}. \pengxin{OTTA techniques are widely adopted in computer vision (CV) community to address the distribution shift issue by continuously updating and refining the model during the testing phase, based on the feedback and new data encountered during deployment \cite{wang2021tent,gandelsman2022test,gong2022note,wang2022continual}. This adaptive process allows the model to learn and adjust its predictions in real-time, improving its performance and generalization to new data.} Though OTTA has been studied in the CV community, to the best of our knowledge, there is no work for spatial-temporal \pengxin{traffic flow} forecasting and in this paper, we will give the first try on it. 
There are some OTTA methods \cite{wang2021tent,gandelsman2022test,gong2022note,wang2022continual} proposed to solve CV problems, but directly applying them to solve spatial-temporal \pengxin{traffic flow} forecasting problem may not give satisfactory performance as they do not consider the complex spatial and temporal correlations present in spatial-temporal data and the distribution shift problem that occurs over time. Furthermore, as mentioned above, the \textit{relativity of data identity} property (see Figure \ref{fig:intro_all}(b)) in time series data is also a crucial distinction from CV problems, which is defined as follows: at time $t+1$, we can observe the true label of data $\mathbf{x}_t$. Thus, 
after we make predictions on data $\mathbf{x}_t$ at time $t$, we can utilize the ground-truth label information $\mathbf{y}_t$ at time $t+1$ to update the model, which is not taken into consideration in existing OTTA methods.


To apply OTTA techniques to the spatial-temporal \pengxin{traffic flow} forecasting problem, there are several challenges. Firstly, simply fine-tuning the trained model on spatial-temporal data streams may impair the performance of the model, as we only have access to one data point at a time. Secondly, spatial-temporal data often suffer from the distribution shift problem \cite{du2021adarnn,duan2023combating}, meaning that the data distribution tends to change over time (see Figure \ref{fig:intro_all}(a)). Thirdly, spatial-temporal data is typically composed of multiple time series variables with different levels of temporal drift \pengxin{and the relationship between different time series variables is complicated} \cite{bai2020adaptive,tian2021spatial}, making the problem more complex. Therefore, applying OTTA to spatial-temporal data is a non-trivial problem and requires further studies.

To address above problems, we propose the Adaptive Double Correction by Series Decomposition (ADCSD) method, which first decomposes the output of the trained model into seasonal and trend-cyclical parts and then corrects them by two separate modules. Specifically, to solve the first issue discussed in the previous paragraph, inspired by Fast Weight Layers (FWLs) \cite{clark2022meta}, instead of fine-tuning the whole trained model, we attach a lite network after the trained model and fine-tune the lite network. To tackle the second issue, we first take the idea of decomposition \cite{cleveland1990stl}, a standard time series analysis method, to decompose the series into seasonal and trend-cyclical parts and then adopt two modules to correct them, respectively. Benefiting from that such decomposition can ravel out the entangled temporal patterns and highlight the inherent properties of time series \cite{hyndman2018forecasting}, we can easily correct the temporal drift between the future data and historical data. Then we add the original output of the trained model with the corrected seasonal and trend-cyclical parts to obtain the final corrected output to conquer the temporal drift problem. For the third issue mentioned above, we design two adaptive vectors to \pengxin{implicitly involve the interaction of spatial information between time series variables and} balance the weight between the original output and corrected seasonal and trend-cyclical parts to fit different time series variables with different levels of temporal drift. In summary, our contributions are as follows.
\begin{itemize}
    \item To the best of our knowledge, we are the first to apply OTTA to spatial-temporal \pengxin{traffic flow} forecasting problems. 
    \item To deal with the temporal drift problem, we propose the ADCSD method, which can be used by various spatial-temporal \pengxin{traffic flow} deep models in a plug-and-play manner to improve their performance.
    \item Extensive experiments on both graph-based and grid-based traffic flow forecasting datasets demonstrate the effectiveness of the proposed ADCSD method.
\end{itemize}

\section{Related Work}
\label{sec:rel_work}

\subsection{Spatial-Temporal Traffic Flow Forecasting} Spatial-temporal \pengxin{traffic flow} forecasting, which combines both spatial and temporal information to predict how 
\pengxin{the traffic flow} 
will change over time and space, has attracted much attention in recent years \cite{li2020long, amato2020novel, liu2022msdr, nagy2018survey, jones2017machine, longo2017crowd}. To perform reliable and accurate \pengxin{traffic flow} forecasting, many models have been proposed \cite{guo2019attention, bai2020adaptive, guo2021learning, pdformer}.
For example, Attention based Spatial-Temporal Graph Convolutional Networks (ASTGCN) \cite{guo2019attention} that consists of three independent components to respectively model three temporal properties of traffic flows, i.e., recent, daily-periodic and weekly-periodic dependencies, Adaptive Graph Convolutional Recurrent Network (AGCRN) \cite{bai2020adaptive} that can capture fine-grained spatial and temporal correlations in traffic series automatically based on two module (i.e., Node Adaptive Parameter Learning module and Data Adaptive Graph Generation module) and recurrent networks, Attention based Spatial-Temporal Graph Neural Network (ASTGNN) \cite{guo2021learning} that consists of a novel self-attention mechanism to capture the temporal dynamics of traffic data and a dynamic graph convolution module to capture the spatial correlations in a dynamic manner, Propagation Delay-aware Dynamic Long-range Transformer (PDFormer) \cite{pdformer} that consists of a spatial self-attention module to capture the dynamic spatial dependencies and two graph masking matrices to highlight spatial dependencies from short- and long-range views, etc. 
However, all of those methods train models on the historical data and make predictions on the future data without modifying the trained model. Thus, the trained model usually cannot give satisfactory performance due to the non-stationary of time series data, i.e., temporal drift between the historical and future data \cite{kuznetsov2014generalization,du2021adarnn}. Some works attempt to solve this problem \cite{du2021adarnn,you2021learning,arik2022self,kim2022reversible,duan2023combating,wang2023koopman,bai2023temporal}. For example,  Du et al. \cite{du2021adarnn} propose Adaptive RNNs (AdaRNN) that first introduces a temporal distribution characterization (TDC) algorithm to split the training data into several diverse periods and then adopts a temporal distribution matching (TDM) algorithm to dynamically reduce  the distribution divergence. Kim et al. \cite{kim2022reversible} propose Reversible Instance Normalization (RevIN) which is a generally applicable normalization-and-denormalization method with learnable affine transformation and can remove and restore the statistical information of a time-series instance. Duan et al. \cite{duan2023combating} propose Hyper TimeSeries Forecasting (HTSF) that exploits the hyper layers to learn the best characterization of the distribution shifts and generate model parameters for the main layers to make accurate predictions. Wang et al. \cite{wang2023koopman} propose Koopman Neural Forecaster (KNF) that based on Koopman theory for time-series data with temporal distributional shifts can capture the global behaviors and evolve over time to adapt to local changing distributions. However, the assumption that the future data follows one distribution is not realistic. Besides, those models are deployed with fixed learned parameters at test time, which cannot adapt to the changing data distributions. In this work, we draw inspiration from the setting of online test-time adaptation \cite{liang2023comprehensive} and fine-tune the trained model during the testing phase to adapt to the future data distribution, which is in line with the characteristics of spatial-temporal data. To the best of our knowledge, we are the first to study the OTTA for spatial-temporal \pengxin{traffic flow} forecasting to solve the temporal drift problem.

\subsection{Online Test-Time Adaptation} Online Test-Time Adaptation (OTTA), which only can access the trained model and unlabelled test data, is to improve the performance of the model during inference \cite{hu2021mixnorm,azimi2022self,hong2023mecta}. 
\pengxin{Typically, during the training of machine learning models, we use offline datasets for training and adjust the model based on the training data. However, when the model is deployed in real-world scenarios, it may encounter new and unseen data distributions or face changing environments. This can lead to a degradation in model performance. OTTA addresses this issue by continuously updating and refining the model during the testing phase, based on the feedback and new data encountered during deployment. This adaptive process allows the model to learn and adjust its predictions in a real-time manner to improve its performance and generalization to new data.}
During past years, many (online) test-time adaptation methods have been proposed, such as Fully Test-Time Adaptation by Entropy Minimization (TENT) \cite{wang2021tent} that optimizes the model to improve the confidence measured by the entropy of its predictions, Test-Time Training with Masked Autoencoders (TTT-MAE) \cite{gandelsman2022test} that use masked autoencoders for the one-sample learning problem, NOn-i.i.d. TEst-time adaptation scheme (NOTE) \cite{gong2022note} that adopts instance-aware batch normalization for out-of-distribution samples and prediction-balanced reservoir sampling to simulate an i.i.d. data stream from a non-i.i.d. stream in a class-balanced manner, Laplacian Adjusted Maximum-likelihood Estimation (LAME) \cite{boudiaf2022parameter} that aims at providing a correction to the output probabilities of a classifier, etc. However, all of those methods focus on computer vision problems and do not deal with spatial-temporal \pengxin{traffic flow} problems. Different from computer vision data, there are spatial and temporal relationships in spatial-temporal data and we should take into consider when we study the OTTA setting for spatial-temporal data, which is what the proposed ADCSD method does.

\section{Methodology}
\label{sec:method}
In this section, we first formalize the OTTA setting for spatial-temporal \pengxin{traffic flow} forecasting, and then present the proposed ADCSD method to learn under the OTTA setting.

\subsection{Problem Settings
}

We consider the OTTA of spatial-temporal \pengxin{traffic flow} forecasting problem. Specifically, we have access to a history model $f$ trained on the historical data and future test data $\mathcal{X} = \{\mathbf{x}_t, \mathbf{y}_t\}_{t=1}^{n}$, where $n$ is the total number of future test data, $\mathbf{x}_t \in \mathbb{R}^{N \times T \times C}$ is the \pengxin{traffic flow} data of $N$ \pengxin{traffic nodes} at $T$ time slices with $C$ features, and $\mathbf{y}_t \in \mathbb{R}^{N \times T' \times C}$ is the corresponding label.\footnote{$T' = 1$ implies a one-step prediction task, while $T' > 1$ implies a multi-step prediction task.} For the OTTA of spatial-temporal \pengxin{traffic flow} forecasting problem, we can first access data $\mathbf{x}_t$ at time $t$ and need to make prediction for  $\mathbf{x}_t$. Then at time $t+1$, we have access to the true label $\mathbf{y}_t$ of data $\mathbf{x}_t$ since we can observe it after time $t$ and can utilize this labeled data to update the model. In other words, at each time, we first need to make a prediction for this data and then can utilize the labeled data to update the model. 
In the following, we will describe our method that utilizes the labeled data to update the model.

\subsection{Overview}

Since there usually are temporal drift between time series data, it is necessary to correct the output of a model trained on the historical data to perform well on the future data. To achieve this, we propose the ADCSD method. Specifically, inspired by Fast Weight Layers (FWLs) \cite{clark2022meta}, instead of fine-tuning the whole trained model, we attach a lite network behind the trained model and only fine-tune the lite network. In addition, based on series decomposition \cite{cleveland1990stl}, the proposed ADCSD method can separate the series data into trend-cyclical and seasonal parts to deeply analyze the pattern of series data and correct them by two modules. Finally, two adaptive vectors are introduced to \pengxin{implicitly model spatial interactions between traffic nodes and} balance the weight between the original output and corrected seasonal and trend-cyclical parts to fit that different \pengxin{traffic nodes} have different levels of temporal drift. The overall framework is shown in Figure \ref{fig:framework}. In the following sections, we introduce all the components one by one.

\begin{figure*}[!htbp]
\centering
\includegraphics[width=.8\textwidth]{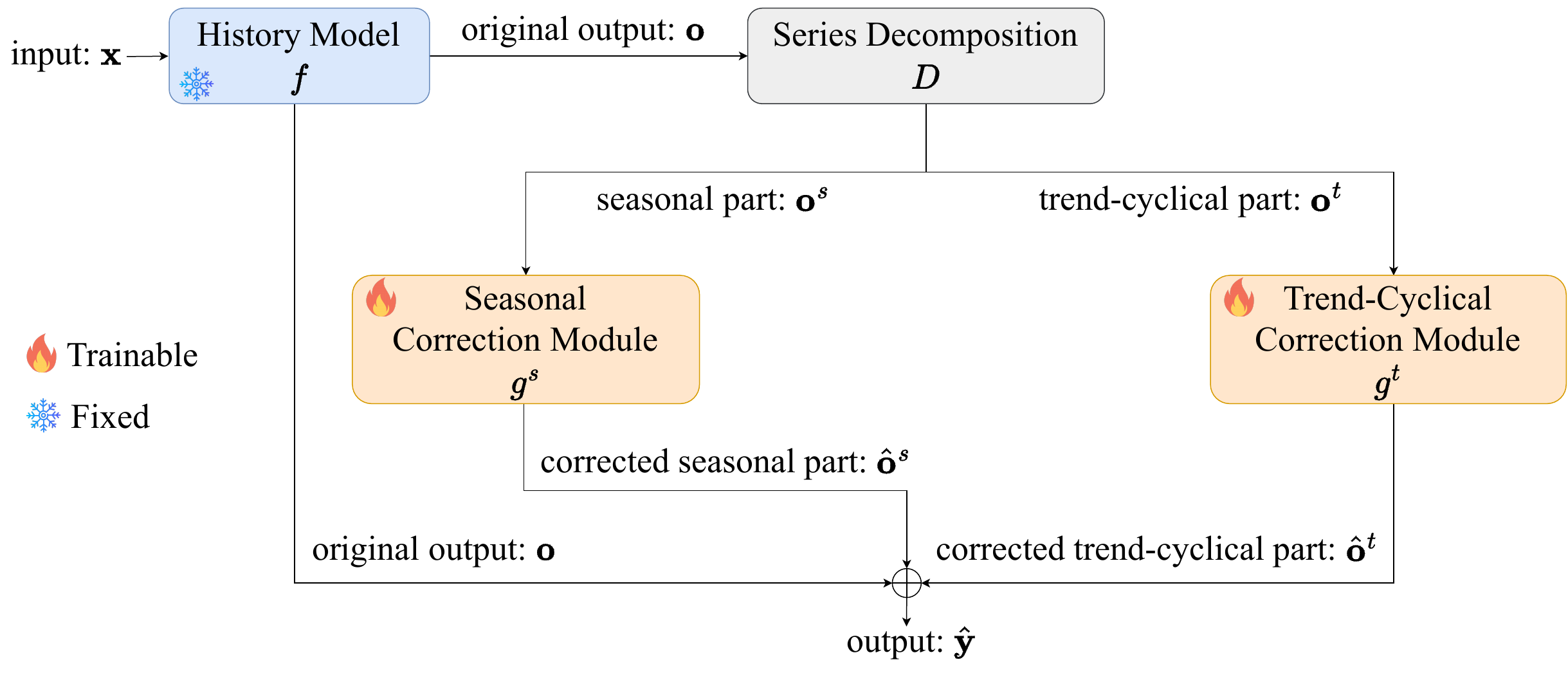}
\caption{The framework of the proposed ADCSD method.}
\label{fig:framework}
\end{figure*}

\subsection{Series Decomposition}

For the OTTA of spatial-temporal \pengxin{traffic flow} forecasting, we have access to a history model $f$ trained on the historical data and future data $\mathcal{X}$ need to be predicted. To correct the output of the history model, we take the idea of series decomposition \cite{cleveland1990stl}, which can separate time series data into trend-cyclical and seasonal parts to learn complex temporal patterns in long-term forecasting. Specifically, for future data $\mathbf{x}$,\footnote{For notation simplicity, we omit the subscript time $t$.} we first obtain the output $\mathbf{o}$ of the history model $f$ as
\begin{equation} \label{eq:original_output}
    \mathbf{o} = f(\mathbf{x}).
\end{equation}
Then, to correct the output, we first decompose the output into seasonal and trend-cyclical parts by series decomposition. In detail, we first obtain the trend-cyclical part $\mathbf{o}^t$ of the original output via moving average as
\begin{equation*}
    \mathbf{o}^t = \text{AvgPool}(\text{Padding}(\mathbf{o})),
\end{equation*}
where $\text{AvgPool}(\cdot)$ denotes the average pooling operation on the time dimension and $\text{Padding}(\cdot)$ denotes the padding operation that is used to keep the length unchanged. Then the seasonal part $\mathbf{o}^s$ is equal to the difference between the original output and the trend-cyclical part, i.e.,
\begin{equation*}
    \mathbf{o}^s = \mathbf{o} - \mathbf{o}^t.
\end{equation*}
In short, we use the following equation to denote the series decomposition as
\begin{equation} \label{eq:decompose}
    \mathbf{o}^s, \mathbf{o}^t = D(\mathbf{o}).
\end{equation}
After the series decomposition, we can break down the task of correcting the original output into correcting its seasonal and trend-cyclical parts, respectively, which facilitates the overall correction process.

\subsection{Correction Modules}

To overcome the temporal drift problem, two correction modules are adopted to correct the seasonal and trend-cyclical parts, respectively. Specifically, for the seasonal part, the correction module is defined as 
\begin{equation} \label{eq:corr_sea}
    \hat{\mathbf{o}}^s = g^s(\mathbf{o}^s),
\end{equation}
where $\hat{\mathbf{o}}^s$ is the corrected seasonal part and $g^s$ denotes the seasonal correction module. The seasonal correction module used in our work is a neural network of two fully-connected layers with layer normalization \cite{ba2016layer} and Gaussian Error Linear Unit (GELU) \cite{hendrycks2016gaussian} activation function in the middle. Similarly, for the trend-cyclical part, the correction module is defined as 
\begin{equation} \label{eq:corr_trend}
    \hat{\mathbf{o}}^t = g^t(\mathbf{o}^t),
\end{equation}
where $\hat{\mathbf{o}}^t$ is the corrected trend-cyclical part and $g^t$ denotes the trend-cyclical correction module. The trend-cyclical correction module has the same architecture as the seasonal correction module. With the integration of these two correction modules, the proposed ADCSD method enables the adaptation of the output to effectively align with the distribution of future data. By dynamically adjusting the model's predictions based on the characteristics of the incoming data, we can enhance its ability to accurately capture the evolving patterns and trends present in the spatial-temporal domain. This adaptability ensures that our method remains robust and performs well when the data distribution changes over time.

\subsection{Adaptive Combination}

\pengxin{Previous works \cite{zhao2019t, bai2020adaptive, pdformer} have shown that the relationship between different nodes is dynamic but not static. However, the mechanisms they employed to model the interaction of spatial information between nodes, such as graph neural networks or spatial attentions, are relatively heavyweight due to matrix multiplications involved.}
Moreover, since different \pengxin{nodes} may have different levels of temporal drift and a shared module has difficulty in learning the different levels of temporal drift among different time series. 
\pengxin{To address these challenges, we adopt a lightweight approach that includes the introduction of two adaptive vectors. These vectors serve to implicitly incorporate the interaction of spatial information between traffic nodes and assign varying weights to different nodes in the corrected seasonal and trend-cyclical components, respectively, when combining them with the original output.}
Formally, let $\bm{\lambda}^s \in \mathbb{R}^N$ denote the seasonal adaptive vector and $\bm{\lambda}^t \in \mathbb{R}^N$ denote the trend-cyclical adaptive vector. The final output $\hat{\mathbf{y}}$ of our method is given as:
\begin{equation} \label{eq:predict}
    \hat{\mathbf{y}} = \mathbf{o} + \bm{\lambda}^s \hat{\mathbf{o}}^s + \bm{\lambda}^t \hat{\mathbf{o}}^t.
\end{equation}
By utilizing those two trainable vectors, we are able to learn different weights for different \pengxin{nodes} for the corrected seasonal and trend-cyclical parts, respectively, when combining them with the original output. Finally, the objective function of the proposed ADCSD method is formulated as:
\begin{equation} \label{eq:obj}
    \min_{g^s, g^t, \bm{\lambda}^s, \bm{\lambda}^t} \ell(\mathbf{y}, \hat{\mathbf{y}}),
\end{equation}
where $\ell$ denotes the loss function such as the square loss. Note that parameters of the history model $f$ are fixed and we only update the parameters in correction modules $g^s$ and $g^t$ and adaptive vectors $\bm{\lambda}^s$ and $\bm{\lambda}^t$. The entire algorithm of the proposed ADCSD method is given in Algorithm \ref{alg:method}.

\begin{algorithm}[!ht]
\renewcommand{\algorithmicrequire}{\textbf{Input:}}
\renewcommand{\algorithmicensure}{\textbf{Output:}}
\caption{Adaptive Double Correction by Series Decomposition}
\label{alg:method}
    \begin{algorithmic}[1] 
        \REQUIRE  history model $f$, future data $\mathcal{X}$; 
        
        
        \STATE Randomly initialize $g^s$ and $g^t$; 
        \STATE Initialize $\bm{\lambda}^s$ and $\bm{\lambda}^t$ as $\mathbf{0}$;
                
        \FOR {$t=1,\cdots,n$}
            \STATE Compute the original output using Eq. (\ref{eq:original_output});
            \STATE Decompose the original output into seasonal and trend-cyclical parts using Eq. (\ref{eq:decompose});
            \STATE Correct the seasonal and trend-cyclical parts respectively using Eqs. (\ref{eq:corr_sea}) and (\ref{eq:corr_trend});
            \STATE Compute the final output $\hat{\mathbf{y}}_t$ using Eq. (\ref{eq:predict});
            \STATE Update the parameters of $g^s$, $g^t$, $\bm{\lambda}^s$ and $\bm{\lambda}^t$ by minimizing problem (\ref{eq:obj});
        \ENDFOR
    \end{algorithmic}
\end{algorithm}

\subsection{Analysis}
\label{sec:theo_anal}

For parameter efficiency, the proposed ADCSD method attaches a lite network including $g^s$ and $g^t$ behind the trained model and fine-tune the lite network. Here we provide some analyses to give insights into this method.

Let $\mathcal{X} = \{\mathbf{x}_t, \mathbf{y}_t\}_{t=1}^{n}$ denotes the future \pengxin{traffic flow} data, where $\mathbf{x}_t\in \mathbb{R}^{N\times T\times C}$ and $\mathbf{y}_t\in \mathbb{R}^{N\times T'\times C}$. $f$ is the model trained on the historical data and the parameters are fixed during the testing phase. Let us consider two models. The output of model 1 is formulated as:
\begin{equation} \label{theo:model_1}
    \hat{\mathbf{y}}=f(\mathbf{x}),
\end{equation}
and that of model 2 is 
\begin{equation} \label{theo:model_2}
    \Tilde{\mathbf{y}}=f(\mathbf{x})+g(f(\mathbf{x})),
\end{equation}
where $g$ is the attached lite network. Hence, model 2 is just the proposed ADCSD method. We choose the square loss as the loss function for those two models and the loss function is defined as
\begin{equation}
    \ell(\mathbf{y},\hat{\mathbf{y}})=\frac{1}{NT'C}\sum_{i=1}^{NT'C}(y_i-\hat{y}_i)^2,
\end{equation}
where $y_i$ is the $i$-th component of $\mathbf{y}$ and $\hat{y}_i$ is the $i$-th component of $\hat{\mathbf{y}}$. For training losses of those two models, we have the following result.

\begin{theorem} \label{theo:theorem_1}
We can find suitable function $g$ satisfying that the training loss of model 2 is lower than model 1, that is, we have
\begin{equation}
    \ell(\mathbf{y},\Tilde{\mathbf{y}}) < \ell(\mathbf{y},\hat{\mathbf{y}}).
\end{equation}
\end{theorem}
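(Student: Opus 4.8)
The plan is to prove the statement by an explicit construction rather than by analysing any training dynamics: since model~2's prediction differs from model~1's only through the additive residual term $g(f(\mathbf{x}))$, it is enough to exhibit one admissible $g$ that nudges the residual a controlled amount toward the target $\mathbf{y}$. First I would dispose of the degenerate case. If $\ell(\mathbf{y},\hat{\mathbf{y}})=0$, then model~1 already attains zero loss and no strict improvement is possible; so we may assume $\ell(\mathbf{y},\hat{\mathbf{y}})>0$, i.e.\ $\mathbf{y}\neq f(\mathbf{x})$ in at least one coordinate. This is the only situation in which the inequality is meaningful.

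Next, fix any $\epsilon\in(0,1)$ and let $g$ be the map that sends the particular value $\mathbf{o}=f(\mathbf{x})$ to $\epsilon(\mathbf{y}-\mathbf{o})$. Because $\mathbf{y}$ is a fixed constant for the data entry under consideration, this is a well-defined function of its input. With this choice,
\begin{equation*}
    \Tilde{\mathbf{y}} = f(\mathbf{x}) + \epsilon\bigl(\mathbf{y}-f(\mathbf{x})\bigr) = (1-\epsilon)\,f(\mathbf{x}) + \epsilon\,\mathbf{y},
\end{equation*}
so the residual error satisfies $\mathbf{y}-\Tilde{\mathbf{y}} = (1-\epsilon)(\mathbf{y}-f(\mathbf{x}))$. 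Substituting into the definition of the square loss, a one-line computation gives $\ell(\mathbf{y},\Tilde{\mathbf{y}}) = (1-\epsilon)^2\,\ell(\mathbf{y},\hat{\mathbf{y}})$, and since $0<(1-\epsilon)^2<1$ together with $\ell(\mathbf{y},\hat{\mathbf{y}})>0$, the right-hand side is strictly smaller than $\ell(\mathbf{y},\hat{\mathbf{y}})$, which is exactly the claim. (Taking $\epsilon=1$ would even drive the loss to zero, but a small $\epsilon$ emphasises that this is a mild correction, not an overfit.)

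The step I would treat most carefully is the status of $g$. The theorem only asks for a \emph{suitable function}, so the construction above already settles it as stated. If, however, one wants $g$ to lie in the actual two-fully-connected-layer class used by ADCSD, I would additionally note that the prescribed residual $\epsilon(\mathbf{y}-\mathbf{o})$ is affine in $\mathbf{o}$, hence representable exactly by a linear layer or approximable to arbitrary accuracy by the network, so the strict inequality survives a sufficiently good approximation. And if the ``training loss'' is interpreted as an average over several entries $\{\mathbf{x}_t,\mathbf{y}_t\}$ rather than a single one, I would invoke a standard interpolation/universal-approximation argument: a wide enough network can simultaneously realise the target residual $\epsilon(\mathbf{y}_t-f(\mathbf{x}_t))$ at the finitely many (generically distinct) points $f(\mathbf{x}_t)$, after which the same per-entry computation and summation yield the result. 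I expect no genuine obstacle here — the content of the theorem is essentially that an additive correction head cannot hurt and can strictly help whenever the frozen model is not already perfect — so the write-up should stay at the level of this existence argument.
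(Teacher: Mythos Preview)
Your proof is correct and in the same spirit as the paper's, though packaged more cleanly. The paper argues coordinate-wise: it expands the difference $\ell(\mathbf{y},\hat{\mathbf{y}})-\ell(\mathbf{y},\Tilde{\mathbf{y}})$ to obtain $\sum_i G_i(C_i-G_i)$ with $G_i=g_i(f(\mathbf{x}))$ and $C_i=2(y_i-f_i(\mathbf{x}))$, observes this is a downward parabola in $G_i$ with roots at $0$ and $C_i$, and concludes that each summand can be made positive by picking $G_i$ strictly between the roots. Your explicit choice $g(\mathbf{o})=\epsilon(\mathbf{y}-\mathbf{o})$ is precisely the special case $G_i=\tfrac{\epsilon}{2}C_i$, but by working globally you collapse the whole computation to the single identity $\ell(\mathbf{y},\Tilde{\mathbf{y}})=(1-\epsilon)^2\ell(\mathbf{y},\hat{\mathbf{y}})$, which is tidier and makes the contraction factor explicit. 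You also add two things the paper omits: the degenerate case $\ell(\mathbf{y},\hat{\mathbf{y}})=0$ (where strict inequality is impossible), and the remark that the required $g$ is affine and hence realisable in the actual network class. Both are welcome additions.
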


\begin{proof}
Let $\ell(\mathbf{y},\hat{\mathbf{y}})=\frac{1}{NT'C}\sum_{i=1}^{NT'C}(y_i-\hat{y}_i)^2=\frac{1}{NT'C}\sum_{i=1}^{NT'C}(y_i-f_i(\mathbf{x}))^2$, where $y_i$ is the $i$-th component of $\mathbf{y}$, $\hat{y}_i$ is the $i$-th component of $\hat{\mathbf{y}}$, and $f_i$ is the $i$-th component of function $f$. Similarly, $\ell(\mathbf{y},\Tilde{\mathbf{y}})=\frac{1}{NT'C}\sum_{i=1}^{NT'C}(y_i-\Tilde{y}_i)^2=\frac{1}{NT'C}\sum_{i=1}^{NT'C}(y_i-f_i(\mathbf{x})-g_i(f(\mathbf{x})))^2$, where $\Tilde{y}_i$ is the $i$-th component of $\Tilde{\mathbf{y}}$, and $g_i$ is the $i$-th component of function $g$. 
Then we have  
\begin{equation}
\begin{split}
	&NT'C[\ell(\mathbf{y},\hat{\mathbf{y}})-\ell(\mathbf{y},\Tilde{\mathbf{y}})]\nonumber\\
    =&\sum[(y_i-f_i(\mathbf{x}))^2-(y_i-f_i(\mathbf{x})-g_i(f(\mathbf{x})))^2] \\
    =&\sum[y_i^2-2y_if_i(\mathbf{x})+f^2_i(\mathbf{x})-(y_i^2+f^2_i(\mathbf{x})+g_i^2(f(\mathbf{x})) \\
    &- 2y_if_i(\mathbf{x})-2y_ig_i(f(\mathbf{x}))+2f_i(\mathbf{x})g_i(f(\mathbf{x})))]\\
    =&\sum[2y_ig_i(f(\mathbf{x}))-g_i^2(f(\mathbf{x}))-2f_i(\mathbf{x})g_i(f(\mathbf{x}))]\\
    =&\sum g_i(f(\mathbf{x}))[2y_i-f_i(\mathbf{x})-g_i(f(\mathbf{x}))].
\end{split}
\end{equation}
Let $g_i(f(\mathbf{x}))=G_i$ and $2y_i-f_i(\mathbf{x})=C$, where $C$ is a constant since $g_i$ and $f_i$ are fixed for any given $i$. Then above equation becomes a function of $G_i$, i.e.,
\begin{equation*}
    G_i(C-G_i).
\end{equation*}
It is a conic and has two roots $G_i=0$ and $G_i=C$. By choosing suitable $G_i\in(0,C)$ or $(C,0)$, we can always have $\sum G_i(C-G_i)> 0$, which implies $\ell(\mathbf{y},\hat{\mathbf{y}})-\ell(\mathbf{y},\Tilde{\mathbf{y}})> 0$.
\end{proof}

\begin{remark}
Theorem \ref{theo:theorem_1} implies that for a fixed model, attaching a lite trainable network will incur a lower training loss than that without the lite network. From the perspective of the model capacity, model 1 is a reduced version of model 2 by setting the parameters of the attached network to be zero and hence model 2 has a larger capacity than model 1, making model 2 possess a large chance to have a lower training loss. This conclusion is further verified in our experiments (i.e., Section \ref{sec:ablation_study}).
\end{remark}

To provide a theoretical analysis of the necessity of adding the original output to the final output. Let us consider the third model without the original output. Thus, the output of model 3 is formulated:
\begin{equation} \label{theo:model_3}
    \Bar{\mathbf{y}}=g(f(\mathbf{x})).
\end{equation}
This model adopts the same loss function as models 1 and 2. For training losses of models 2 and 3, we have the following result.

\begin{theorem} \label{theo:theorem_2}
    We can find suitable function $g$ satisfying that the training loss of model 2 is lower than model 3, that is 
\begin{equation}
    \ell(\mathbf{y},\Tilde{\mathbf{y}}) < \ell(\mathbf{y},\Bar{\mathbf{y}}).
\end{equation}
\end{theorem}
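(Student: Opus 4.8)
The plan is to mirror the proof of Theorem~\ref{theo:theorem_1}. Write $\bar y_i = g_i(f(\mathbf{x}))$ and $\tilde y_i = f_i(\mathbf{x}) + g_i(f(\mathbf{x}))$ for the $i$-th components of the model~3 and model~2 predictions. First I would expand the difference of losses using the square loss,
\[ NT'C\big[\ell(\mathbf{y},\bar{\mathbf{y}})-\ell(\mathbf{y},\tilde{\mathbf{y}})\big]=\sum_i\big[(y_i-\bar y_i)^2-(y_i-\tilde y_i)^2\big], \]
and cancel the common $y_i^2$, $g_i^2(f(\mathbf{x}))$ and $y_i g_i(f(\mathbf{x}))$ contributions. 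Setting $u_i=y_i-g_i(f(\mathbf{x}))$, one has $y_i-\tilde y_i=u_i-f_i(\mathbf{x})$, so each bracket collapses to $u_i^2-(u_i-f_i(\mathbf{x}))^2=f_i(\mathbf{x})\big(2u_i-f_i(\mathbf{x})\big)$, which gives
\[ NT'C\big[\ell(\mathbf{y},\bar{\mathbf{y}})-\ell(\mathbf{y},\tilde{\mathbf{y}})\big]=\sum_i f_i(\mathbf{x})\big[\,2y_i-f_i(\mathbf{x})-2g_i(f(\mathbf{x}))\,\big]. \]

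Next I would exploit the freedom in $g$, exactly as Theorem~\ref{theo:theorem_1} does. For a fixed input the quantities $a_i:=f_i(\mathbf{x})$ and $C_i:=2y_i-f_i(\mathbf{x})$ are constants, so with $G_i:=g_i(f(\mathbf{x}))$ the right-hand side is $\sum_i a_i\,(C_i-2G_i)$ --- the analogue of the ``conic'' $\sum_i G_i(C-G_i)$ of Theorem~\ref{theo:theorem_1}, here degenerated to an affine function of each $G_i$. Hence it suffices to choose, for every index with $a_i\neq 0$, a value $G_i$ making $C_i-2G_i$ share the sign of $a_i$ (for instance $G_i=\tfrac{1}{2}\big(C_i-\operatorname{sign}(a_i)\big)$, so that $a_i(C_i-2G_i)=|a_i|>0$), and $G_i=0$ whenever $a_i=0$. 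Then every summand is nonnegative and at least one is strictly positive, so $\ell(\mathbf{y},\tilde{\mathbf{y}})<\ell(\mathbf{y},\bar{\mathbf{y}})$, provided $f(\mathbf{x})$ is not the zero vector on the observed data --- a mild non-degeneracy condition satisfied by any non-trivial trained forecaster, and one I would state explicitly (if $f(\mathbf{x})=\mathbf{0}$ the two models coincide and the losses are equal).

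The only subtlety, and the step I expect to be the main obstacle to make fully rigorous, is that $g$ must be a genuine function of $f(\mathbf{x})$, so the $G_i$ are not truly free across distinct data entries; however, exactly as in Theorem~\ref{theo:theorem_1}, the computation is carried out entry-wise for a fixed input, so this causes no real trouble. Alternatively one can argue abstractly: the substitution $g\mapsto g+\mathrm{id}$ shows that the output family $\{f(\mathbf{x})+g(f(\mathbf{x}))\}_g$ of model~2 already contains every output $\{g(f(\mathbf{x}))\}_g$ reachable by model~3, so model~2 can match model~3 and then strictly improve by ``overshooting'' along the nonzero direction $f(\mathbf{x})$; this is the same capacity argument as in the remark following Theorem~\ref{theo:theorem_1}.
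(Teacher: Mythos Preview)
Your proof is correct and follows essentially the same approach as the paper: expand the difference of square losses componentwise to obtain $\sum_i f_i(\mathbf{x})\big[2y_i - f_i(\mathbf{x}) - 2g_i(f(\mathbf{x}))\big]$, then choose each $g_i$ so that the two factors share a sign (the paper phrases this as making them ``have the same symbol''). Your explicit choice of $G_i$ and the non-degeneracy caveat $f(\mathbf{x})\neq\mathbf{0}$ are refinements the paper omits, but the core argument is identical.
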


\begin{proof}
Let $\ell(\mathbf{y},\Bar{\mathbf{y}})=\frac{1}{NT'C}\sum_{i=1}^{NT'C}(y_i-\Bar{y}_i)^2=\frac{1}{NT'C}\sum_{i=1}^{NT'C}(y_i-g_i(f(\mathbf{x})))^2$, where $y_i$ is the $i$-th component of $\mathbf{y}$, $\Bar{y}_i$ is the $i$-th component of $\Bar{\mathbf{y}}$, and $g_i$ is the $i$-th component of the function $g$. Similarly,  $\ell(\mathbf{y},\Tilde{\mathbf{y}})=\frac{1}{NT'C}\sum_{i=1}^{NT'C}(y_i-\Tilde{y}_i)^2=\frac{1}{NT'C}\sum_{i=1}^{NT'C}(y_i-f_i(\mathbf{x})-g_i(f(\mathbf{x})))^2$, where $\Bar{y}_i$ is the $i$-th component of $\Bar{\mathbf{y}}$, and $f_i$ is the $i$-th component of the function $f$.
Then, in order to prove
\begin{equation*}
    \ell(\mathbf{y},\Tilde{\mathbf{y}}) < \ell(\mathbf{y},\Bar{\mathbf{y}}).
\end{equation*}
We only need to prove
\begin{equation*}
    \ell(\mathbf{y},\Bar{\mathbf{y}}) - \ell(\mathbf{y},\Tilde{\mathbf{y}}) > 0.
\end{equation*}
Then, we have
\begin{equation} \small
\begin{split}{}
    & NT'C [\ell(\mathbf{y},\Bar{\mathbf{y}})-\ell(\mathbf{y},\Tilde{\mathbf{y}})]\nonumber\\
    =&\sum[(y_i-g_i(f(\mathbf{x})))^2-(y_i-f_i(\mathbf{x})-g_i(f(\mathbf{x})))^2] \\
    =&\sum[y_i^2-2y_ig_i(f(\mathbf{x}))+g_i^2(f(\mathbf{x}))-(y_i^2+f^2_i(\mathbf{x})+g_i^2(f(\mathbf{x})) \\
    &- 2y_if_i(\mathbf{x})-2y_ig_i(f(\mathbf{x}))+2f_i(\mathbf{x})g_i(f(\mathbf{x})))]\\
    =&\sum[2y_if_i(\mathbf{x})-f_i^2(\mathbf{x})-2f_i(\mathbf{x})g_i(f(\mathbf{x}))]\\
    =&\sum f_i(\mathbf{x})[-f_i(\mathbf{x})+2y_i-2g_i(f(\mathbf{x}))].
\end{split}
\end{equation}
Since $y_i$ and $f_i(\cdot)$ are fixed for any given $i$, by choosing $g$, we can make $f_i(\mathbf{x})$ and $[-f_i(\mathbf{x})+2y_i-2g_i(f(\mathbf{x}))$ have the same symbol, i.e.,
\begin{equation*}
    f_i(\mathbf{x})[-f_i(\mathbf{x})+2y_i-2g_i(f(\mathbf{x}))]>0.
\end{equation*}
Thus, we have $\sum f_i(\mathbf{x})[-f_i(\mathbf{x})+2y_i-2g_i(f(\mathbf{x}))]>0$, which demonstrates $\ell(\mathbf{y},\Bar{\mathbf{y}})-\ell(\mathbf{y},\Tilde{\mathbf{y}})>0$.
\end{proof}

\begin{remark}
Theorem \ref{theo:theorem_2} proves the importance and necessity of adding the original output, which is also demonstrated in our experiments (i.e., Section \ref{sec:ablation_study}). 
\end{remark}

\section{Experiments}
\label{sec:exp}


\subsection{Datasets} 

\begin{table*}[h]
\centering
\caption{Statistics of Datasets.}
\resizebox{\linewidth}{!}{
\begin{tabular}{ccccccc}
\toprule 
Datasets & \#\pengxin{Nodes} & \#Time Interval & \#Input Window & \#Output Window & Train:Val:Test & Time Range \\
\midrule
PeMS07 & 883 & 5min & 12 & 12 & 6:2:2 & 05/01/2017-08/31/2017 \\
BayArea & 699 & 5min & 12 & 12 & 5:2:3 & 01/01/2019-12/30/2019 \\
\cmidrule{1-7}
NYCTaxi & 75 (15x5) & 30min & 6 & 6 & 7:1:2 & 01/01/2014-12/31/2014 \\
T-Drive & 1024 (32x32) & 60min & 6 & 6  & 7:1:2 & 02/01/2015-06/30/2015 \\

\bottomrule
\end{tabular}
}
\label{tab:data}
\end{table*}

We evaluate the proposed ADCSD method on four real-world public spatial-temporal traffic datasets, including two graph-based highway traffic datasets, i.e., PeMS07 \cite{song2020spatial}, BayArea, and two grid-based citywide traffic datasets, i.e., NYCTaxi \cite{liu2020dynamic}, T-Drive \cite{pan2019urban}. 

\textbf{PeMS07} is collected from the Performance Measurement System (PeMS) of California Transportation Agencies (CalTrans)   \cite{chen2001freeway}. It is collected from 883 sensors installed in the California with observations of 4 months of data ranging from May 1 to August 31, 2017.

\textbf{BayArea} is also collected from PeMS of CalTrans. It is collected from 4096 sensors installed in the Bay Area with observations of 12 months of data ranging from January 1 to December 30, 2019. Then we discard the data of nodes with a missing rate greater than 0.1\%, leaving only 699 nodes.

\textbf{NYCTaxi} is a dataset of taxi trips made in New York City. It is provided by the New York City Taxi and Limousine Commission (TLC)\footnote{https://www.nyc.gov/site/tlc/about/tlc-trip-record-data.page}. The data used in our paper spans from January 1 to December 31, 2014. 

\textbf{T-Drive}\footnote{https://www.microsoft.com/en-us/research/publication/t-drive-trajectory-data-sample/} is a dataset based on taxi GPS trajectories on Beijing, jointly released by Microsoft Asia Research Institute and Peking University.  The data used in our paper spans from February 1 to June 30, 2015.

The graph-based datasets contain only the traffic flow data, and the grid-based datasets contain inflow and outflow data. 
More details about these datasets used in our paper are given in Table \ref{tab:data}. Note that the training and validation sets are regarded as historical data to train the model, and the test sets are regarded as future data.

\subsection{Experimental Settings}

\subsubsection{Baselines}
To validate the correctness and effectiveness of our method, we plug-and-play our method after several commonly used \pengxin{traffic flow} traffic forecasting models, including ASTGCN \cite{guo2019attention}, AGCRN \cite{bai2020adaptive} and PDFormer \cite{pdformer}. To the best of our knowledge, there is no comparable method for OTTA of spatial-temporal \pengxin{traffic flow} forecasting. Thus, we reimplement several OTTA methods used in computer vision area for spatial-temporal \pengxin{traffic flow} forecasting, including TENT \cite{wang2021tent} and TTT-MAE \cite{gandelsman2022test}.

\subsubsection{Implementation details}
All the baseline models (i.e, ASTGCN, AGCRN and PDFormer) are implemented based on the LibCity library \cite{wang2021libcity}. The  seasonal and trend-cyclical correction modules have the same architecture, i.e., two fully connected layers with Layer Normalization \cite{ba2016layer} and Gaussian Error Linear Unit (GELU) \cite{hendrycks2016gaussian} activation function in the middle. For training the history models, we follow the optimization details in their original paper. The optimal history model is determined based on the performance in the validation set. For testing phase, we optimize the correction modules with the Adam \cite{kingma2014adam} optimizer with an initial learning rate of $10^{-4}$. Both the epoch and batch size are set to \textbf{one} to follow the online learning setting. All the experiments are conducted on Tesla A100 GPUs.

\subsubsection{Evaluation Metrics}
We use three metrics in the experiments: (1) Mean Absolute Error (MAE), (2) Mean Absolute Percentage Error (MAPE), and (3) Root Mean Squared Error (RMSE). Missing values are excluded when calculating these metrics. When we test the models on the grid-based datasets, we filter the samples with flow values below 10, consistent with \cite{pdformer}.

\subsection{Main Results}

\begin{table} [h]
\caption{Performance on the \textbf{graph}-based datasets. A lower MSE, MAPE, or MAE indicates better performance, and the best results are highlighted in bold.}
\resizebox{\linewidth}{!}{
\begin{tabular}{c|l|ccc|ccc}
\toprule  & \multirow{2}{*}{Fine-tune method}  & \multicolumn{3}{|c}{PeMS07} & \multicolumn{3}{|c}{BayArea}  \\
\cmidrule(r){3-8}  
& & MAE & MAPE(\%) & RMSE & MAE & MAPE(\%) & RMSE   \\
\midrule
\multirow{4}{*}{\rotatebox{90}{ASTGCN}} & Test & 23.708 & 10.170 & 36.903 & 19.201 & 10.531 & 31.678 \\
& TENT \cite{wang2021tent} & 33.196 & 18.407 & 50.396 & 21.857 & 13.057 & 33.861 \\
& TTT-MAE \cite{gandelsman2022test}  & 32.306 & 15.635 & 46.456 & 20.757 & 12.154 & 32.697 \\
& ADCSD & \textbf{22.910} & \textbf{9.975} & \textbf{35.740} & \textbf{17.380} &  \textbf{9.536} & \textbf{29.111} \\
\cmidrule(r){1-8}  

\multirow{4}{*}{\rotatebox{90}{AGCRN}} & Test & 20.700 & 8.980 & 34.338 & 17.169 & 9.609 & 30.632 \\
& TENT \cite{wang2021tent} & 54.461 & 36.299 & 81.075 & 53.596 & 39.067 & 76.288 \\
& TTT-MAE \cite{gandelsman2022test} & 31.565 & 15.838 & 45.101 & 22.484 & 13.731 & 35.292 \\
& ADCSD & \textbf{20.211} & \textbf{8.623} & \textbf{33.389} & \textbf{15.507} & \textbf{8.542} & \textbf{27.706} \\
\cmidrule(r){1-8}  

\multirow{4}{*}{\rotatebox{90}{PDFormer}} & Test & 19.802 & 8.565 & 32.820 & 14.883 & 7.673 & 27.866 \\
& TENT \cite{wang2021tent} & 23.020 & 11.483 & 35.369 & 20.823 & 14.907 & 31.790 \\
& TTT-MAE \cite{gandelsman2022test} & 20.051 & 8.706 & 33.012 & 15.409 & 8.430 & 27.978 \\
& ADCSD & \textbf{19.627} & \textbf{8.393} & \textbf{32.607} & \textbf{14.667} & \textbf{7.617} & \textbf{27.531} \\
\bottomrule
\end{tabular}
}
\label{tab:main_result_1}
\end{table}

\subsubsection{Graph-based Datasets}
The comparison results with baselines of different models on the graph-based datasets are shown in Table \ref{tab:main_result_1}. Based on this table, we can make the following observations. (1) {\textbf{\textit{OTTA from CV cannot be reused directly:}} The performance of OTTA methods proposed in the computer vision area (i.e., TENT and TTT-MAE) are inferior to the original trained model (i.e., the Test method), which means these methods do not work in the spatial-temporal \pengxin{traffic flow} forecasting problem. Thus, it is necessary to design new algorithm according to the characteristics of \pengxin{traffic flow} data when applying OTTA to spatial-temporal \pengxin{traffic flow} forecasting. (2) \textbf{\textit{Fine-tuning not always works:}} The performance of TENT drops dramatically compared with the original trained model, which indicates fine-tuning all the parameters of the trained model is not a good choice for the online spatial-temporal \pengxin{traffic flow} forecasting problem. 
(3) \textbf{\textit{Effectiveness of ADCSD:}} Our proposed method ADCSD can further improve the performance of the trained models and achieve the best results on all datasets, which demonstrates the effectiveness and stability of our method. \pengxin{The incorporation of learnable components enables the proposed ADCSD method to gradually adapt and learn towards the future data distribution, thereby alleviating the issue of temporal drift.}

\begin{table*}[ht]
\centering
\caption{Performance on the \textbf{grid}-based datasets. A lower MSE, MAPE, or MAE indicates better performance, and the best results are highlighted in bold.}
\resizebox{\linewidth}{!}{
\begin{tabular}{c|l|cccccc|cccccc}
\toprule 
 &  & \multicolumn{6}{|c}{NYCTaxi} & \multicolumn{6}{|c}{T-Drive}  \\
\cmidrule(r){3-14}  
& Fine-tune Method & \multicolumn{3}{|c}{inflow} & \multicolumn{3}{|c}{outflow} & \multicolumn{3}{|c}{inflow} & \multicolumn{3}{|c}{outflow}  \\
\cmidrule(r){3-14}  
& & MAE & MAPE(\%) & RMSE & MAE & MAPE(\%) & RMSE & MAE & MAPE(\%) & RMSE & MAE & MAPE(\%) & RMSE  \\
\midrule
\multirow{4}{*}{\rotatebox{90}{ASTGCN}} & Test & 25.302 & 23.405 & 43.228 & 21.720 & 22.515 & 37.758 & 41.578 & 32.410 & 76.801 & 41.376 & 32.385 & 76.365  \\
& TENT \cite{wang2021tent} & 26.915 & 27.290 & 43.592 & 24.436 & 29.276 & 39.502 & 54.139 & 45.236 & 95.186 & 54.154 & 45.689 & 95.185  \\
& TTT-MAE \cite{gandelsman2022test} & 25.469 & 23.840 & 43.414 & 22.242 & 24.024 & 38.061 & 43.278 & 35.354 & 78.464 & 43.053 & 35.277 & 78.072  \\
& ADCSD & \textbf{25.117} & \textbf{23.280} & \textbf{42.941} & \textbf{21.480} & \textbf{22.487} & \textbf{37.315} & \textbf{41.499} & \textbf{32.328} & \textbf{76.675} & \textbf{41.248} & \textbf{32.299} & \textbf{76.166}  \\
\cmidrule(r){1-14} 

\multirow{4}{*}{\rotatebox{90}{AGCRN}} & Test &  18.299 & 16.445 & 31.948 & 15.754 & 16.211 & 27.383 & 21.374 &  16.478 & 41.066 & 21.395 & 16.459 & 41.081 \\
& TENT \cite{wang2021tent} & 41.357 & 42.575 & 67.217 & 38.109 & 45.765 & 64.413 & 73.054 & 60.799 & 126.561 & 72.923 & 60.353 & 126.684  \\
& TTT-MAE \cite{gandelsman2022test} & 20.675 & 19.105 & 35.302 & 18.052 & 18.929 & 30.755 & 28.572 & 22.574 & 52.056 & 28.588 & 22.523 & 52.121 
 \\
& ADCSD & \textbf{18.114} & \textbf{16.232} & \textbf{31.673} & \textbf{15.569} & \textbf{16.001} & \textbf{27.102} & \textbf{21.242} & \textbf{16.389} & \textbf{40.856} & \textbf{21.271} & \textbf{16.366} & \textbf{40.897} \\
\cmidrule(r){1-14} 

\multirow{4}{*}{\rotatebox{90}{PDFormer}} & Test & 17.150 & 15.320 & 30.185 & 14.778 & 15.055 & 25.903 & 21.481 & 17.504 & 38.889 & 21.515 & 17.507 & 38.914  \\
& TENT \cite{wang2021tent} & 24.256 & 24.304 & 40.333 & 21.265 & 24.858 & 35.357 & 45.906 & 35.253 & 80.966 & 45.734 & 35.034 & 80.766 \\
& TTT-MAE \cite{gandelsman2022test} & 17.221 & 15.437 & {30.163} & 14.902 & 15.204 & 25.998 & 22.164 & 18.187 & 39.515 & 22.194 &  18.193 & 39.547 \\
& ADCSD & \textbf{16.987} & \textbf{15.181} & \textbf{29.928} & \textbf{14.595} & \textbf{14.912} & \textbf{25.460} & \textbf{21.308} & \textbf{17.283} & \textbf{38.664} & \textbf{21.337} & \textbf{17.274} &  \textbf{38.694}\\

\bottomrule
\end{tabular}
}
\label{tab:main_result_2}
\end{table*}

\begin{figure*} [t]
  \centering
    \subfigure[MAE]{\includegraphics[width=0.32\textwidth]{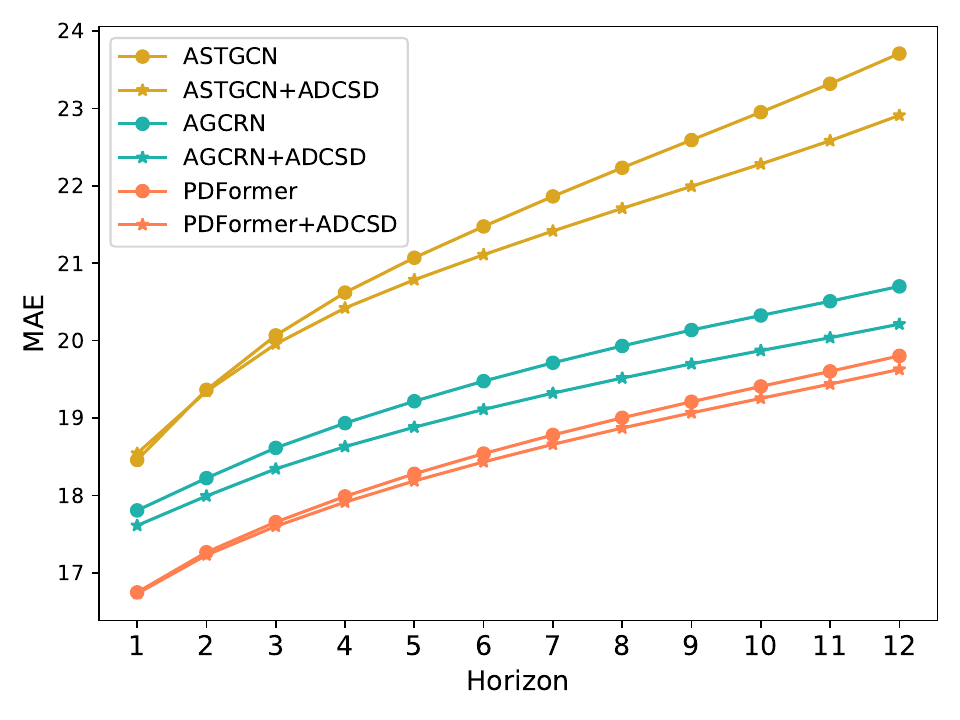}}
    \subfigure[MAPE]{\includegraphics[width=0.32\textwidth]{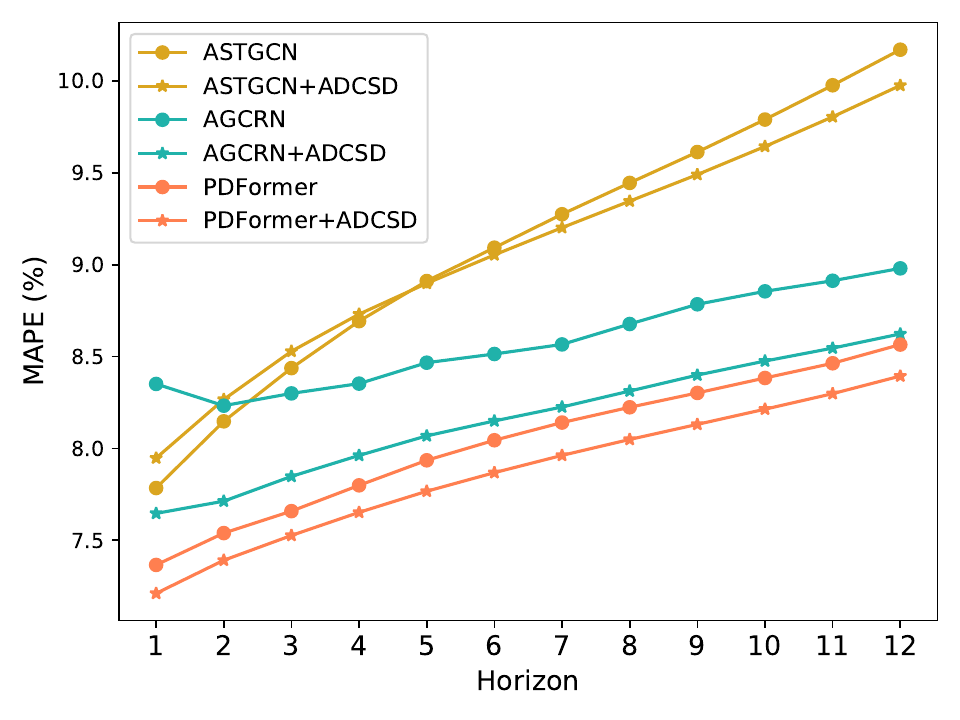}}
    \subfigure[RMSE]{\includegraphics[width=0.32\textwidth]{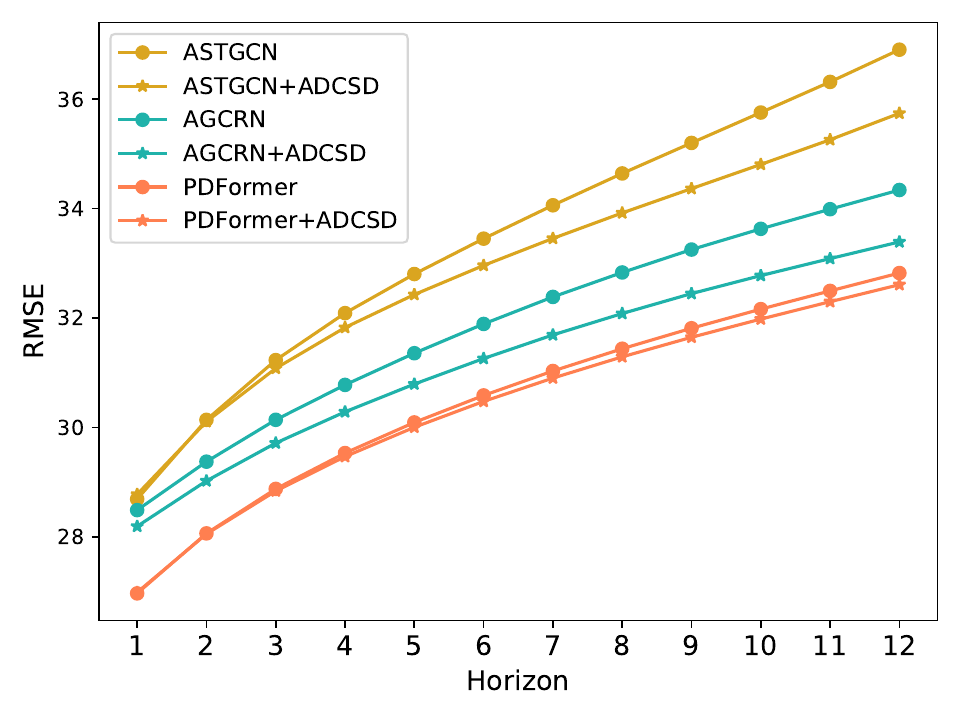}}
  \caption{Prediction performance comparison at each horizon on the PeMS07 dataset.}
\label{fig:each_step}
\end{figure*}

\begin{figure*} [t]
  \centering
    \subfigure[MAE]{\includegraphics[width=0.32\textwidth]{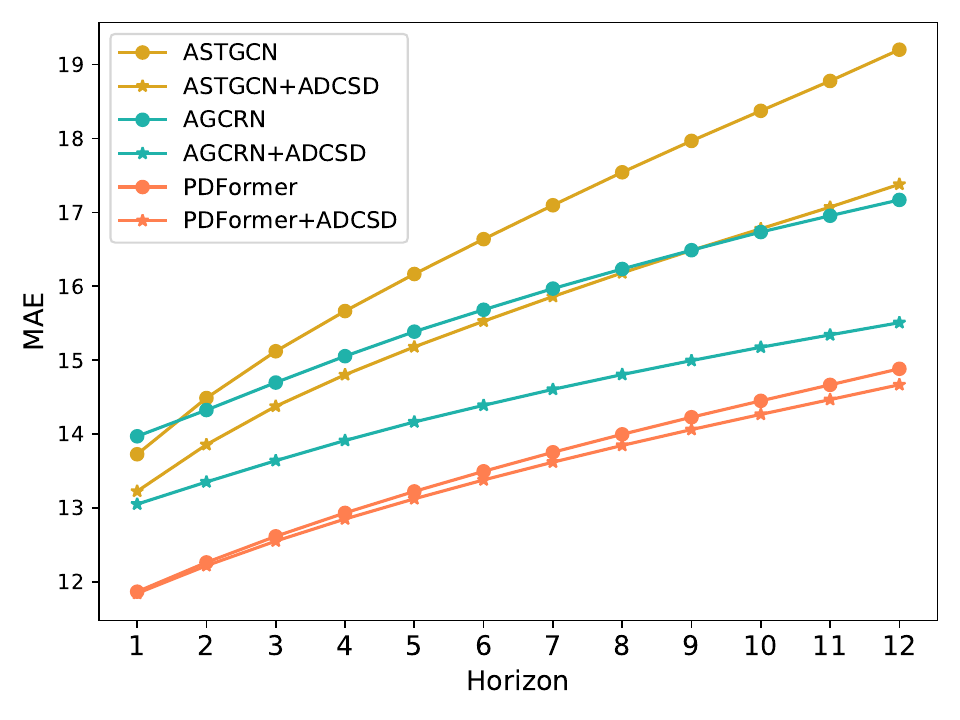}}
    \subfigure[MAPE]{\includegraphics[width=0.32\textwidth]{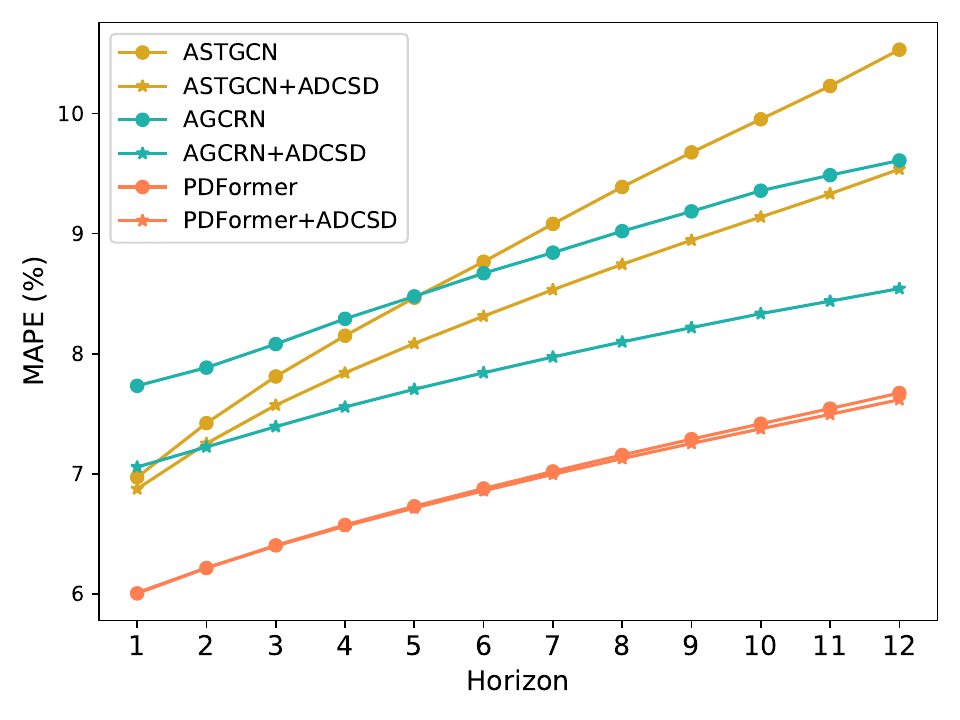}}
    \subfigure[RMSE]{\includegraphics[width=0.32\textwidth]{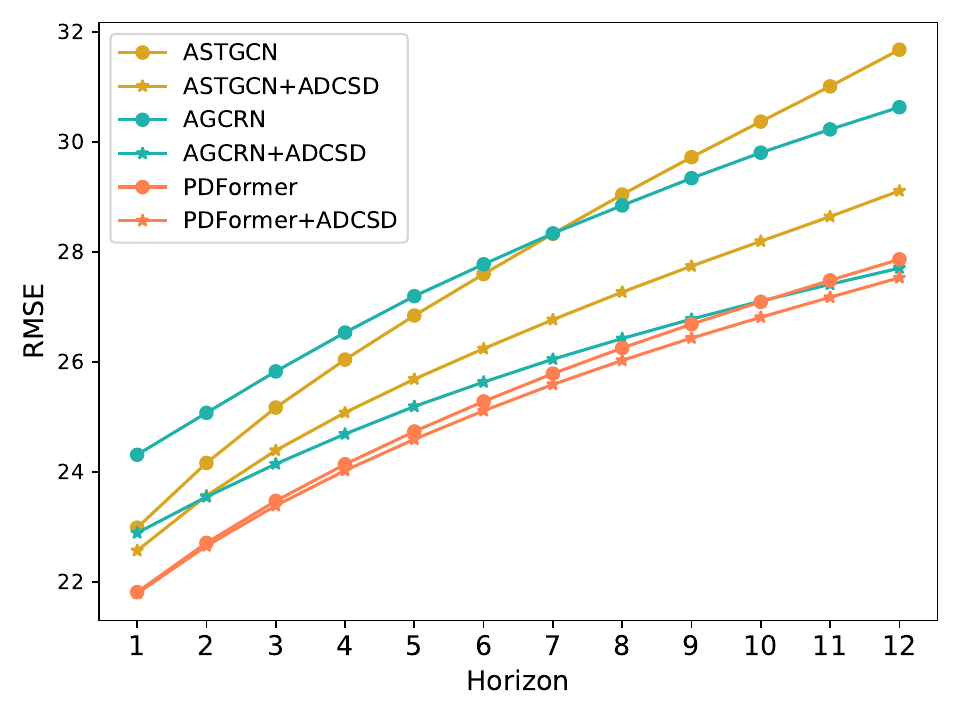}}
  \caption{Prediction performance comparison at each horizon on the BayArea dataset.}
\label{app_fig:each_step}
\end{figure*}

Figure \ref{fig:each_step} further shows the prediction performance at each horizon on PeMS07. With our method attached behind, the performance of the trained models is further improved for almost all horizons, which indicates our method balances short-term and long-term prediction well. Furthermore, our method can achieve more improvement for long-term prediction compared with short-term prediction. This is because, with a longer prediction horizon, the data is more likely to drift away; thus our ADCSD shows its more correction ability from adaptively weighting the seasonal and trend-cyclical
corrections.
Figure \ref{app_fig:each_step} shows the prediction performance at each horizon on the BayArea dataset. Similar to the results on the PeMS07 dataset (see Figure \ref{fig:each_step}), the performance of the trained models improve for almost all horizons with the incorporation of our method, which indicates our method balances short-term and long-term prediction well. In addition, compared to short-term prediction, our method is capable of achieving large improvement in long-term forecasting, which is similar to the phenomenon on the PeMS07 dataset (see Figure \ref{fig:each_step}). Furthermore, our approach can achieve significant performance improvements when the performance of the original model is not so good.

\subsubsection{Grid-based Datasets}
The comparison results with different baselines in the grid-based datasets are shown in Table \ref{tab:main_result_2}. It can be seen from this table that there are the same experimental phenomena as on the graph-based datasets (i.e., Table \ref{tab:main_result_1}), but a slight difference. That is, no matter the performance of the original trained model is good or bad, our method has a stable and relatively obvious improvement, which is different from the phenomenon on the graph-based datasets that there is a higher improvement on models with poor performance and less improvement on models with good performance.

\subsection{Ablation Study}
\label{sec:ablation_study}

To verify the effectiveness of each component in the proposed ADCSD model, we conduct a comprehensive ablation study on both graph-based and grid-based datasets. 

\subsubsection{Graph-based Datasets}
Table \ref{tab:ablation_1} shows ablation study of the graph-based datasets. The model M0 with only the original output $\mathbf{o}$ denotes the results of the model trained on the historical data without any modification. M5 denotes our method. M6 denotes no decomposition on time series. Based on these results, we can conclude the following: (1) \textbf{\textit{Original output matters}}: According to the results between M1 and M2 ($+\mathbf{o}$), there is a significant decrease in performance when the original output is absent. This suggests that the original output plays a crucial role in achieving accurate predictions.
(2) \textbf{\textit{Adaptive vectors matter}}: By comparing M2 and M5 ($+ \bm{\lambda}^s, \bm{\lambda}^t$), we can see that the absence of adaptive vectors leads to a decrease in performance. This finding highlights the importance of adaptive vectors as a critical component of our method. That is because different \pengxin{nodes} have different levels of temporal drift and it's necessary to assign different weights to different \pengxin{nodes}. 
(3) \textbf{\textit{Decomposition matters}}: By comparing M5 and M6, we can see that the absence of decomposing time series leads to a decrease in performance, which indicates the decomposing strategy is helpful. Comparing M3-M5 with M0, we observe that correcting the seasonal or the trend-cyclical part separately can improve the performance, but combining both parts leads to a more significant enhancement in performance, which demonstrates both these two parts are crucial for the performance improvement of our method.

\begin{table}[h]
\caption{Ablation study on the \textbf{graph}-based datasets.}
\resizebox{\linewidth}{!}{
\begin{tabular}{c|c|ccccc|ccc|ccc}
\toprule 
& \multirow{2}{*}{Method} & \multirow{2}{*}{$\mathbf{o}$} & \multirow{2}{*}{$\hat{\mathbf{o}}^s$} & \multirow{2}{*}{$\hat{\mathbf{o}}^t$} & \multirow{2}{*}{$\bm{\lambda}^s$} & \multirow{2}{*}{$\bm{\lambda}^t$} & \multicolumn{3}{|c}{PeMS07} & \multicolumn{3}{|c}{BayArea}  \\
\cmidrule(r){8-13}  
& & & & & & & MAE & MAPE(\%) & RMSE & MAE & MAPE(\%) & RMSE   \\
\midrule
\multirow{7}{*}{\rotatebox{90}{ASTGCN}} & M0 & \checkmark & & & & & 23.708 & 10.170 & 36.903 & 19.201 & 10.531 & 31.678 \\
& M1 & & \checkmark & \checkmark & & & 30.277 & 20.902 & 48.528 & 21.158 & 15.264 & 33.164 \\
& M2 & \checkmark & \checkmark & \checkmark & & & 23.604 & 11.752 & 36.351 & 18.020 & 10.333 & 29.422 \\
& M3 & \checkmark & \checkmark &  & \checkmark &  & 23.329 & 10.193 & 36.213 & 18.220 & 10.432 & 30.022 \\
& M4 & \checkmark &  & \checkmark &  & \checkmark & 22.976 & 9.954 & 35.822 & 17.404 & 9.553 & 29.135 \\
& M5 & \checkmark & \checkmark & \checkmark & \checkmark & \checkmark & 22.910 & 9.975 & 35.740 & 17.380 &  9.536 & 29.111 \\
& M6 & \multicolumn{5}{l|}{$\hat{\mathbf{y}} = \mathbf{o} + \bm{\lambda} \hat{\mathbf{o}}$} & 23.077 & 10.045 & 36.034 & 18.031 & 9.763 & 29.899 \\
\cmidrule(r){1-13}  

\multirow{7}{*}{\rotatebox{90}{AGCRN}} &  M0 &\checkmark & & & & & 20.700 & 8.980 & 34.338 & 17.169 & 9.609 & 30.632 \\
& M1 & & \checkmark & \checkmark & & & 28.495 & 22.191 & 45.410 & 19.076 & 14.359 & 30.887 \\
& M2 & \checkmark & \checkmark & \checkmark & & & 21.358 & 9.952 & 34.764 & 17.279 & 10.509 & 28.530 \\
& M3 & \checkmark & \checkmark &  & \checkmark &  & 20.433 & 8.784 & 33.723 & 16.181 & 9.273 & 28.634 \\
& M4 & \checkmark &  & \checkmark &  & \checkmark & 20.257 & 8.635 & 33.443 & 15.514 & 8.548 & 27.727 \\
& M5 & \checkmark & \checkmark & \checkmark & \checkmark & \checkmark & 20.211 & 8.623 & 33.389 & 15.507 & 8.542 & 27.706 \\
& M6 & \multicolumn{5}{l|}{$\hat{\mathbf{y}} = \mathbf{o} + \bm{\lambda} \hat{\mathbf{o}}$} & 20.404 & 8.681 & 33.777 & 16.033 & 9.025 & 28.334 \\
\cmidrule(r){1-13}  

\multirow{7}{*}{\rotatebox{90}{PDFormer}} &  M0 &\checkmark & & & & & 19.802 & 8.565 & 32.820 & 14.883 & 7.673 & 27.866 \\
& M1 & & \checkmark & \checkmark & & & 24.346 & 17.619 & 37.270 & 17.219 & 10.399 & 29.526 \\
& M2 & \checkmark & \checkmark & \checkmark & & & 20.053 & 9.141 & 32.855 & 15.567 & 8.419 & 27.707 \\
& M3 & \checkmark & \checkmark &  & \checkmark &  & 19.715 & 8.474 & 32.689 & 14.786 & 7.699 & 27.717 \\
& M4 & \checkmark &  & \checkmark &  & \checkmark & 19.654 & 8.447 & 32.634 & 14.670 & 7.618 & 27.535 \\
& M5 & \checkmark & \checkmark & \checkmark & \checkmark & \checkmark & 19.627 & 8.393 & 32.607 & 14.667 & 7.617 & 27.531 \\
& M6 & \multicolumn{5}{l|}{$\hat{\mathbf{y}} = \mathbf{o} + \bm{\lambda} \hat{\mathbf{o}}$} & 19.717 & 8.448 & 32.688 & 14.677 & 7.620 & 27.633 \\

\bottomrule
\end{tabular}
}
\label{tab:ablation_1}
\end{table}

\begin{table*}[t]
\centering
\caption{Ablation study on the \textbf{grid}-based datasets.}
\resizebox{\linewidth}{!}{
\begin{tabular}{c|c|ccccc|cccccc|cccccc}
\toprule 
& & & & & & & \multicolumn{6}{|c}{NYCTaxi} & \multicolumn{6}{|c}{T-Drive}  \\
\cmidrule(r){8-19}  
& Method & {$\mathbf{o}$} & {$\hat{\mathbf{o}}^s$} & {$\hat{\mathbf{o}}^t$} & {$\bm{\lambda}^s$} & {$\bm{\lambda}^t$} & \multicolumn{3}{|c}{inflow} & \multicolumn{3}{|c}{outflow} & \multicolumn{3}{|c}{inflow} & \multicolumn{3}{|c}{outflow}  \\
\cmidrule(r){8-19}  
& & & & & & & MAE & MAPE(\%) & RMSE & MAE & MAPE(\%) & RMSE & MAE & MAPE(\%) & RMSE & MAE & MAPE(\%) & RMSE  \\
\midrule
\multirow{7}{*}{\rotatebox{90}{ASTGCN}} & M0 & \checkmark & & & & & 25.302 & 23.405 & 43.228 & 21.720 & 22.515 & 37.758 & 41.578 & 32.410 & 76.801 & 41.376 & 32.385 & 76.365 \\
& M1 & & \checkmark & \checkmark & & & 31.368 & 31.479 & 57.114 & 28.392 & 31.418 & 51.594 & 72.957 & 65.586 & 140.727 & 68.760 & 45.872 & 130.298 \\
& M2 & \checkmark & \checkmark & \checkmark & & & 25.401 & 24.403 & 42.942 & 22.802 & 26.713 & 37.940 & 42.843 & 37.484 & 77.629 & 47.784 & 49.342 & 81.590 \\
& M3 & \checkmark & \checkmark &  & \checkmark &  & 25.204 & 23.339 & 43.164 & 21.563 & 22.521 & 37.611 & 41.529 & 32.395 & 76.740 & 41.313 & 32.388 & 76.283 \\
& M4 & \checkmark &  & \checkmark &  & \checkmark & 25.133 & 23.252 & 42.991 & 21.547 & 22.523 & 37.446 & 41.523 & 32.348 & 76.694 & 41.294 & 32.322 & 76.207 \\
& M5 & \checkmark & \checkmark & \checkmark & \checkmark & \checkmark & {25.117} & {23.280} & {42.941} & {21.480} & {22.487} & {37.315} & {41.499} & {32.328} & {76.675} & {41.248} & {32.299} & {76.166}  \\
& M6 & \multicolumn{5}{l|}{$\hat{\mathbf{y}} = \mathbf{o} + \bm{\lambda} \hat{\mathbf{o}}$} & 25.131 & 23.316 & 42.941 & 21.559 & 22.586 & 37.428 & 41.503 & 32.388 & 76.786 & 41.332 & 32.315 & 76.223 \\
\cmidrule(r){1-19} 

\multirow{7}{*}{\rotatebox{90}{AGCRN}} & M0 & \checkmark & & & & &  18.299 & 16.445 & 31.948 & 15.754 & 16.211 & 27.383 & 21.374 &  16.478 & 41.066 & 21.395 & 16.459 & 41.081 \\
& M1 & & \checkmark & \checkmark & & & 25.553 & 25.020 & 48.699 & 21.324 & 25.699 & 37.104 & 59.871 & 53.029 & 109.597 & 61.660 & 36.773 & 136.090 \\
& M2 & \checkmark & \checkmark & \checkmark & & & 18.611 & 17.605 & 31.971 & 16.551 & 18.166 & 28.632 & 33.591 & 49.569 & 52.663 & 29.340 & 36.276 & 52.176 \\
& M3 & \checkmark & \checkmark &  & \checkmark &  & 18.195 & 16.289 & 31.807 & 15.649 & 16.077 & 27.210 & 21.285 & 16.411 & 40.983 & 21.307 & 16.393 & 41.003 \\
& M4 & \checkmark &  & \checkmark &  & \checkmark & 18.125 & 16.270 & 31.679 & 15.591 & 16.018 & 27.138 & 21.263 & 16.411 & 40.879 & 21.306 & 16.397 & 40.940 \\
& M5 & \checkmark & \checkmark & \checkmark & \checkmark & \checkmark & {18.114} & {16.232} & {31.673} & {15.569} & {16.001} & {27.102} & {21.242} & {16.389} & {40.856} & {21.271} & {16.366} & {40.897} \\
& M6 & \multicolumn{5}{l|}{$\hat{\mathbf{y}} = \mathbf{o} + \bm{\lambda} \hat{\mathbf{o}}$} & 18.114 & 16.326 & 31.691 & 15.621 & 16.129 & 27.112 & 21.304 & 16.447 & 41.053 & 21.338 & 16.406 & 41.092 \\
\cmidrule(r){1-19} 

\multirow{7}{*}{\rotatebox{90}{PDFormer}} & M0 & \checkmark & & & & & 17.150 & 15.320 & 30.185 & 14.778 & 15.055 & 25.903 & 21.481 & 17.504 & 38.889 & 21.515 & 17.507 & 38.914 \\
& M1 & & \checkmark & \checkmark & & & 30.361 & 28.484 & 66.836 & 24.893 & 27.800 & 55.012 & 76.860 & 51.039 & 174.795 & 74.451 & 47.152 & 164.612 \\
& M2 & \checkmark & \checkmark & \checkmark & & & 18.901 & 18.371 & 32.556 & 16.835 & 19.531 & 28.490 & 26.191 & 21.729 & 46.951 & 31.702 & 29.496 & 54.644 \\
& M3 & \checkmark & \checkmark &  & \checkmark &  & 17.069 & 15.274 & 30.034 & 14.731 & 15.035 & 25.727 & 21.395 & 17.411 & 38.821 & 21.417 & 17.403 & 38.832 \\
& M4 & \checkmark &  & \checkmark &  & \checkmark & 16.999 & 15.189 & 29.941 & 14.608 & 14.919 & 25.487 & 21.343 & 17.327 & 38.674 & 21.379 & 17.324 & 38.709 \\
& M5 & \checkmark & \checkmark & \checkmark & \checkmark & \checkmark & {16.987} & {15.181} & {29.928} & {14.595} & {14.912} & {25.460} & {21.308} & {17.283} & {38.664} & {21.337} & {17.274} &  {38.694} \\
& M6 & \multicolumn{5}{l|}{$\hat{\mathbf{y}} = \mathbf{o} + \bm{\lambda} \hat{\mathbf{o}}$} & 17.001 & 15.201 & 30.020 & 14.673 & 14.976 & 25.653 & 21.412 & 17.529 & 38.715 &  21.412 & 17.493 & 38.701 \\

\bottomrule
\end{tabular}
}
\label{app_tab:ablation_2}
\end{table*}

\subsubsection{Grid-based Datasets}
\label{appx: grid-ablation}

Experimental results of ablation study on the grid-based datasets are shown in Table \ref{app_tab:ablation_2}. The phenomena are similar to the results on the graph-based dataset (see Table \ref{tab:ablation_1}). That is, (1) \textbf{\textit{Original output matters}}: The original output plays a crucial role in achieving accurate predictions. (2) \textbf{\textit{Adaptive vectors matter}}: The adaptive vectors are a critical component of our method. (3) \textbf{\textit{Decomposition matters}}: The decomposing strategy is helpful. Both the seasonal and trend-cyclical parts are crucial to the performance improvement of our method. 

\begin{table}[t]
\caption{Computational cost on the NYCTaxi dataset.}
\resizebox{\linewidth}{!}{
\begin{tabular}{l|cc|cc|cc}
\toprule 
\multirow{2}{*}{Method}  & \multicolumn{2}{|c}{ASTGCN} & \multicolumn{2}{|c}{AGCRN} & \multicolumn{2}{|c}{PDFormer}  \\
\cmidrule(r){2-7}  
& \# Parameters & \# Time  & \# Parameters & \# Time & \# Parameters & \# Time  \\
\midrule
Test & - & 43s & - & 35s & - & 63s \\
TENT \cite{wang2021tent} & 72542 & 133s & 750330 & 127s & 531628 & 307s \\
TTT-MAE \cite{gandelsman2022test}  & 72542 & 144s & 750330 & 128s & 531628 & 344s \\
ADCSD & 231834 & 97s & 231834 & 76s & 231834 & 132s \\ 

\bottomrule
\end{tabular}
}
\label{tab:compution_cost}
\end{table}

\subsection{Computational Cost}

To evaluate the computational cost, we compare the parameter numbers and testing time of our method with TENT and TTT-MAE on the NYCTaxi dataset in Table \ref{tab:compution_cost}. In terms of the training time, our method runs faster than all compared baseline methods and only slightly slower than the direct test method, which demonstrates the efficiency of our method. Furthermore, the training parameters of our method are less than OTTA baselines when the trained model is AGCRN or PDFormer. That is because the training parameters of baselines  are related to the trained model. The more parameters of the trained model, the more parameters baselines are required, while the parameters of our method are fixed and kept at a small value.

\section{Conclusion}
\label{sec:conclu}

\pengxin{In this paper}, to tackle the temporal drift problem in spatial-temporal \pengxin{traffic flow} forecasting, we conduct the first study to apply OTTA to the spatial-temporal \pengxin{traffic flow} forecasting problem. In order to adapt to the characteristics of \pengxin{traffic flow} data, we propose the ADCSD model, \pengxin{which first decomposes the output of a trained model into seasonal and trend-cyclical parts and then corrects them by two separate modules during the testing phase using the latest observed data}. Based on this operation, the model trained on the historical data can better adapt to future data. 
The proposed method is a lite network that can be universally attached with main-stream \pengxin{traffic flow forecasting} deep models as a plug-and-play component, and it could significantly improve their performance. 
Applying OTTA to spatial-temporal \pengxin{traffic flow} forecasting to solve the temporal drift problem is practical since it fits the characteristics of time series data, and we hope more researchers can pay attention to this practical setting. 
In our future study, we are interested in applying the proposed ADCSD model to more applications in intelligent transportation.

\section*{Acknowledgments}

This work is partially supported by the National Key R\&D Program of China (NO.2022ZD0160101) and Shenzhen fundamental research program JCYJ20210324105000003.


 
%

\bibliography{main}
\bibliographystyle{IEEEtran}

\vfill

\end{document}